\let\MYoriglatexcaption\caption
\renewcommand{\caption}[2][\relax]{\MYoriglatexcaption[#2]{#2}}
\theoremstyle{definition}
\newcommand{\pushright}[1]{\ifmeasuring@#1\else\omit\hfill$\displaystyle#1$\fi\ignorespaces}
\newcommand{\pushleft}[1]{\ifmeasuring@#1\else\omit$\displaystyle#1$\hfill\fi\ignorespaces}
\newcolumntype{?}[1]{!{\vrule width #1}}
 \def\x{{\mathbf x}}
 \def\mn{\ell} 
 \def\mx{u} 
 \def\w{{\mathbf w}} 
 \def\U{{\mathbf U}} 
 \def\A{{\mathbf \Lambda}} 
 \def\alphaa {{\boldsymbol \alpha}}
 \newtheorem{proposition}{Proposition}
\begin{document}
\title{Learning Explicit Deep Representations from Deep Kernel Networks}

\author{Mingyuan Jiu,
  Hichem Sahbi
  \thanks{M. Jiu is with School of Information Engineering, Zhengzhou University, Zhengzhou, China. Email: iemyjiu@zzu.edu.cn.} 
\thanks{H. Sahbi is with CNRS, LIP6 UPMC, Sorbonne University, Paris, France. Email: hichem.sahbi@lip6.fr}
}
\maketitle

\begin{abstract}
Deep kernel learning aims at designing nonlinear combinations of multiple standard elementary kernels by training deep networks. This scheme has proven to be effective, but intractable when handling large-scale datasets especially when the depth of the trained networks increases; indeed, the complexity of evaluating these networks scales quadratically \textcolor{black}{w.r.t.} the size of training data and linearly \textcolor{black}{w.r.t.} the depth of the trained networks. \\

\indent  In this paper, we address the issue of efficient computation in Deep Kernel Networks (DKNs) by designing effective maps  in the underlying Reproducing Kernel Hilbert Spaces. Given a pretrained DKN, our method builds its associated Deep Map Network (DMN) whose inner product approximates the original network while being far more efficient. The design principle of our method is greedy and achieved layer-wise, by finding maps that approximate DKNs at different (input, intermediate and output) layers. This design also considers an extra fine-tuning step based on unsupervised learning, that further enhances the generalization ability of the trained DMNs. When plugged into SVMs, these DMNs turn out to be as accurate as the underlying DKNs while being at least an order of magnitude faster \textcolor{black}{on large-scale datasets}, as shown through extensive experiments on the challenging ImageCLEF and COREL5k benchmarks.
\end{abstract}

\begin{IEEEkeywords}
\textcolor{black}{Multiple kernel learning, kernel design, deep networks, efficient computation,} image annotation.
\end{IEEEkeywords}

\IEEEpeerreviewmaketitle
\section{Introduction} \label{sec:intro}

\IEEEPARstart{K}{ernel} design has been an active field of machine learning during the last two decades with many innovative kernel-based algorithms successfully applied to various tasks, including support vector machines (SVMs) for pattern classification and support vector regression for multivariate estimation~\cite{Caputo2004, Lyu2005, Grauman2007,Qi2007,sahbi2002coarse,sahbi2003coarse} as well as kernel-PCA for dimensionality reduction~\cite{Weinberger2004}. The success of these kernel-based algorithms is highly dependent on the choice of kernels; the latter are defined as symmetric and positive semi-definite functions that return similarity between data~\cite{Vapnik1998, ShaweTaylor2004}. Various kernels have been introduced in the literature~\cite{ShaweTaylor2004} including standard elementary kernels (linear, polynomial, Gaussian, histogram intersection, etc.) as well as sophistical ones that model more complex relationships between data \cite{Grauman2007,sahbi2008context,wang2013directed}. However, in practice, knowing a priori which (elementary or sophisticated) kernel is suitable for a given task is not obvious and research has recently been undertaken in order to train suitable kernels for different classification tasks (see for instance~\cite{Lanckriet2004, Yu2009nips, Corinna2010, Sahbi2011, Sahbi2011a,tollari2008comparative,boujemaa2004visual,Sahbi2015}). \\

\indent Among existing solutions, Multiple Kernel Learning (MKL) \cite{Lanckriet2004, Bach2004, Rakotomamonjy2008} has been popular; its principle consists in learning (sparse or convex) linear combinations of elementary kernels that maximize performances for a given classification task. Different MKL algorithms have been proposed in the literature, including constrained quadratic programming~\cite{Lanckriet2004}, second-order cone and semi-infinite linear programming~\cite{Bach2004,Sonnenburg2006} as well as simpleMKL based on mixed-norm regularization~\cite{Rakotomamonjy2008}. In spite of their relative success these solutions hit two major limitations:  on the one hand, the convexity of these simple linear MKL models may limit the space of possible (and also relevant) solutions. On the other hand, MKL solutions, relying on shallow kernel combinations, are less powerful (compared to their deep variants) in order to capture different levels of abstractions in the learned kernel similarity. Considering these two issues, nonlinear and deep architectures have been recently proposed and turned out to be more effective: for instance, hierarchical multiple kernel learning is proposed in~\cite{Bach2009} where elementary kernels are  embedded into acyclic directed graphs while in~\cite{Cortes2009}, nonlinear combination of  polynomial kernels are used. Following the spirit of deep convolutional neural networks~\cite{LeCun98, Krizhevsky2012, Farabet2013}, authors in~\cite{Yu2009nips} adopt kernel functions as a prior knowledge for regularization. In~\cite{Cho2009}, Cho and Saul propose Arc-cosine kernels that  mimic the computation of large neural nets which  can be used in shallow as well as deep networks.  In~\cite{Zhuang2011a}, a multi-layer nonlinear MKL framework is proposed, but it is restricted to only two layers; in this solution, an exponential activation function is applied to each intermediate and output kernel combination. In~\cite{Jiu2015}, Jiu and Sahbi extend this method to a deeper network of more than two layers using a semi-supervised setting that takes into account the topology of training and test data. In all the aforementioned MKL algorithms, the computational complexity of kernel (gram-matrix) evaluation is a major issue that limits the applicability of these methods; indeed, considering a dataset with $N$ samples, this complexity reaches $O(L N^2)$ with $L$ being the depth of the deep kernel networks; this evaluation process is clearly intractable even on reasonable size datasets.\\
\noindent Existing solutions that reduce the computational complexity of evaluating these kernels consider instead explicit maps. In this respect,  different solutions have been proposed in the literature including: the Nystr\"{o}m expansion~\cite{Williams2001} which generates low-rank kernel map approximations of original gram-matrices from uniformly sampled data without replacement\footnote{Bounds, on Nystr\"{o}m approximation and sampling, are given  in \cite{Drineas2005, Kumar2012}}, and the random Fourier sampling (proposed by Rahimi an Recht \cite{Rahimi2007} and extended to group-invariant kernel method in~\cite{LiIonescu2010}) which builds explicit features for stationary kernels using random sampling of the Fourier spectrum. The explicit feature maps for additive homogeneous kernels are also given in~\cite{Vedaldi2012} and  finite approximations are derived based on spectral analysis. Other works have been undertaken including random features \cite{HuangICASSP2013} and convolutional kernel networks \cite{Mairal2014}, which approximate  maps of Gaussians using convolutional neural networks. \\
\indent In this paper, we propose a novel method that reduces the computational complexity of DKN evaluation (and therefore SVM learning) on large datasets. We address the issue of kernel map approximation for {\it any} deep nonlinear combination of elementary kernels rather than one specific type of kernels as achieved in the aforementioned related work. Our solution relies on the {\it positive semi-definiteness} (p.s.d) of existing elementary kernels (linear, polynomial, etc.) and the {\it closure properties} of p.s.d with respect to different operations (including  product, addition and exponentiation) in order to express DKN with DMN. In these closure properties, linear combinations of kernels correspond  to concatenations of their respective maps, while products correspond to Kronecker tensor operations, etc. As some elementary kernels\footnote{such as Gaussian and Histogram Intersection.} used to feed the inputs of DKNs may have infinite dimensional or undefined maps, we consider new explicit maps that accurately approximate these elementary kernels. Considering these maps as inputs, this greedy process  continues layer-wise in order to find all the maps of the subsequent (intermediate and output) layers.  Note that the contribution presented in this paper is an extension of our preliminary work in~\cite{Jiu2016}, but it differs at least in two aspects: first, we consider an unsupervised training criterion that benefits from abundant unlabeled data in order to further decrease the approximation error of the trained DMN and thereby making its generalization power as high as the underlying DKN (and also better than existing elementary and shallow kernel combinations; as shown through experiments). Furthermore, with DMNs, one may employ efficient SVM learning algorithms based on stochastic gradient descent~\cite{Fan2008} on large-scale datasets, rather than usual training algorithms that rely on heavy gram-matrices  and intractable quadratic programming problems. All these statements are corroborated through extensive experiments using two benchmarks: ImageCLEF Photo Annotation~\cite{Villegas2013,sahbi2013cnrs} and COREL5k~\cite{Duygulu2002}. \\

\begin{figure*}[ht]
\begin{center}
\includegraphics[width=0.25\linewidth]{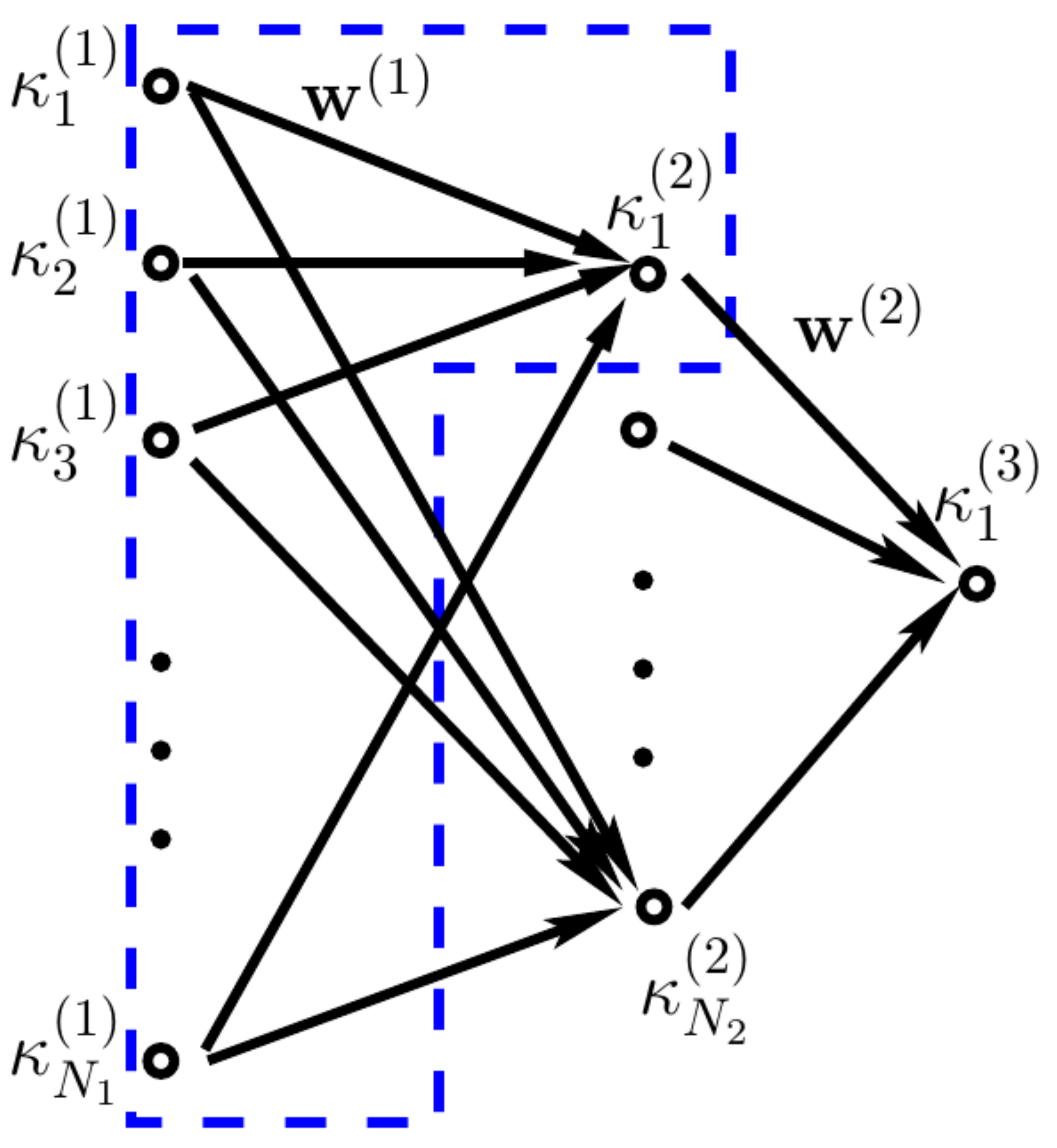}
\hspace{3cm}
\includegraphics[width=0.36\linewidth]{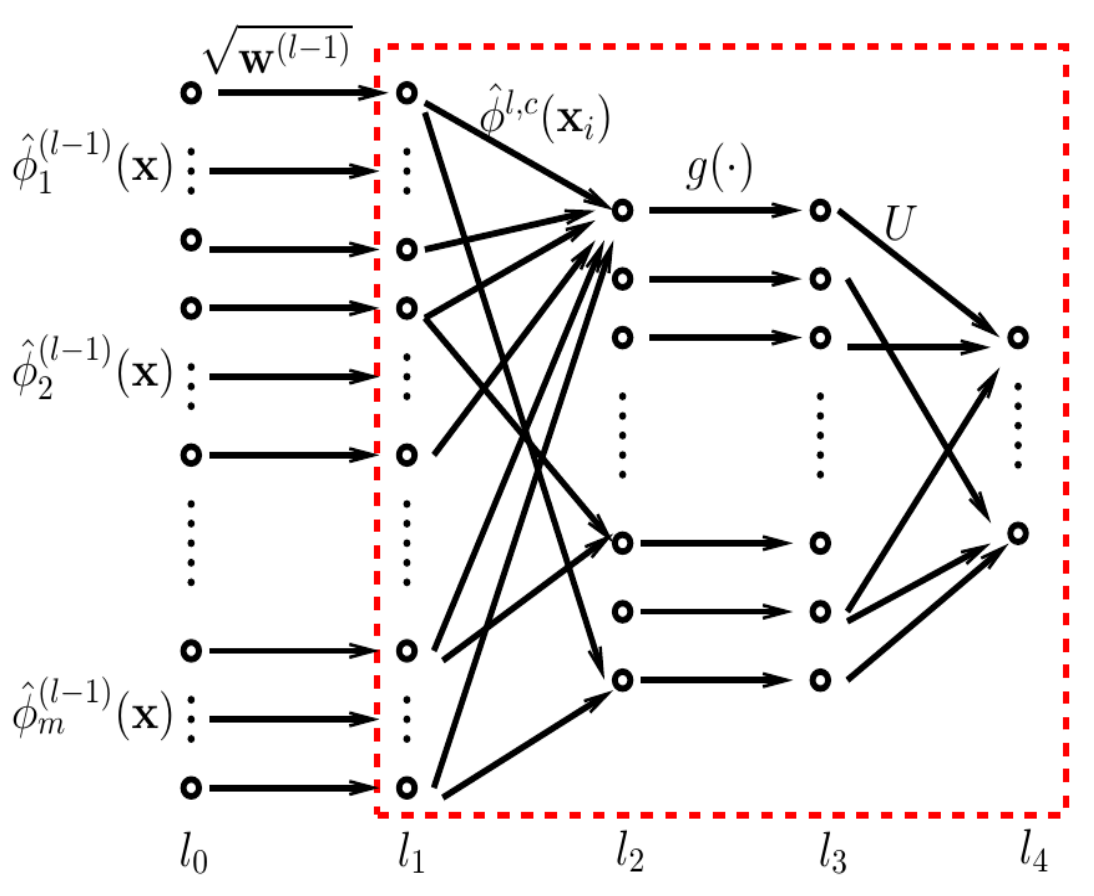}
\end{center}
\caption{\small Left: a three-layer deep kernel network (DKN). Right: a sub-module of deep map network (DMN). The blue dash in the left figure denotes a sub-module of DKN where each node stands for a kernel. The input in the right figure corresponds to the kernel maps and each unit stands for a feature.}
\label{fig:dmn}
\end{figure*}
The rest of this paper is organized as follows: in Section~\ref{sec:dkn} we first briefly remind DKNs, and then in Section~\ref{sec:dmn} we introduce a novel method that builds their equivalent DMNs. In Section~\ref{sec:unsupervisedlearning}, we describe an unsupervised setting of our DMN design while in Section~\ref{sec:experiments}, we present the experimental validation of our method on image annotation tasks using ImageCLEF and COREL5k benchmarks. Finally, we conclude the paper while providing possible extensions for a future work. 

\section{deep kernel networks at a glance} \label{sec:dkn}

A deep kernel network \cite{Zhuang2011a, Jiu2015} is a multi-layered architecture that recursively defines nonlinear combinations of elementary kernels (linear, Gaussian, etc.). Let ${\kappa}_{p}^{(l)}(\cdot,\cdot)$ denote a kernel function  assigned to unit $p$ and layer $l$; ${\kappa}_{p}^{(l)}$ is recursively defined as the output of a nonlinear activation function\footnote{For instance, exponential function \cite{Zhuang2011a}.}  (denoted $g$)  applied to a weighted combination of (input or intermediate) kernels from the preceding layer $(l-1)$ as
\begin{equation}
  {\kappa}_{p}^{(l)}(\cdot,\cdot) = g \big(\sum_q {\bf w}_{p,q}^{(l-1)} \ \kappa_q^{(l-1)}(\cdot,\cdot) \big),
  \end{equation} 
with $\{{\bf w}_{p,q}^{(l-1)}\}$ being weights connecting units at layers $l$ and $l-1$; see the blue dashed area in Fig.~(\ref{fig:dmn}, left). This feed-forward kernel evaluation is achieved layer-wise till reaching the final output kernel. In this recursive definition, other activation functions $g$ can be chosen (particularly for the intermediate layers) including the hyperbolic  making the learning numerically more stable while also preserving the p.s.d of the final output kernel. \\

\noindent For a given classification task, the weights $\{{\bf w}_{p,q}^{(l-1)}\}$ are trained discriminatively \cite{Zhuang2011a, Jiu2015} using a max margin SVM criterion which aims at minimizing a regularized hinge loss on top of the learned DKN. This results into an SVM optimization problem which is solved in its dual form by backpropagating the gradient of that form \textcolor{black}{w.r.t.} the output kernel using the chain rule \cite{LeCun98}, then the weights connecting layers in the DKN are updated using gradient descent. Variants of this optimization criterion, leveraging both labeled and unlabeled data (following a semi-supervised and laplacian setting) makes it possible to train better DKN as detailed in \cite{Jiu2015}. 

\section{Deep map networks} \label{sec:dmn}
In this section, we introduce a novel method that finds for any given DKN, its associated DMN; the proposed method proceeds layer-wise by finding explicit maps that best fit the original kernels in the DKN. As shown later in experiments, this process delivers highly efficient DMNs, while being comparably accurate \textcolor{black}{w.r.t.} their underlying DKNs. Later in Section~\ref{sec:unsupervisedlearning}, we introduce an extension that further enhances the approximation quality of our DMN; starting from the initial weights of the DMN, we update these weights by minimizing the difference between inner products of the maps in the DMN and the original kernels in the DKN. The strength of this extension also resides in its unsupervised setting which makes it possible to learn from abundant unlabeled sets. \\ 
\indent Considering all the elementary (input) kernels in the DKN as positive semi-definite and resulting from the closure of the p.s.d w.r.t. different operations (including sum, product, exponential and hyperbolic activation functions), all the intermediate and output kernels $\{\kappa_{p}^{(l)}\}_{l,p}$ will also be p.s.d. Each $\kappa_{p}^{(l)}(\x,\x')$ can therefore be written as an inner product of kernel maps as $\langle \phi_p^l(\x),\phi_p^l(\x')\rangle$, with  $\phi_p^l: {\cal X} \rightarrow {\cal H}$ being a mapping from the input space $\cal X$ to a high dimensional space $\cal H$.  As the explicit form of $\phi_p^l$ is not necessarily explicit (known), our goal is to design an approximated mapping $\hat{\phi}_p^l$ that guarantees $\kappa_{p}^{(l)}(\x,\x') \simeq \langle \hat{\phi}_p^l(\x), \hat{\phi}_p^l(\x')\rangle$. When these approximated mappings through different layers are known, the resulting DMN provides deep kernel representations from the input data. 
 
\subsection{Input layer maps} \label{sec:inputlayer}

In order to fully benefit from DMNs, the maps of the elementary kernels, that feed these DMNs, should be explicitly known. As discussed earlier, different kernels have different maps; for linear and polynomial, their maps are straightforward and can be easily defined. However, for other more powerful and discriminating kernels, such as the Gaussian and the histogram intersection (HI), their maps are either infinite dimensional or unknown. In this subsection, the definitions of exact and approximate explicit maps are shown for different kernels (including polynomial and HI). \\

\noindent {\bf Exact polynomial kernel map.} An {\it n}-degree polynomial kernel defined as $\kappa_p^{(1)}(\x,\x')=\langle \x,\x'\rangle^n$  can be expressed as $\kappa_p^{(1)}(\x,\x')=\langle \x \otimes^n \x ,\x' \otimes^n \x' \rangle$, with $\otimes^n$ standing for the Kronecker tensor product applied $n$ times. Hence, it is easy to see that the exact explicit map for an {\it n}-degree polynomial kernel is ${\phi}_p^{(1)}(\x) =  \x \otimes^n \x$. \\

\noindent {\bf Approximate HI kernel map.} The approximate explicit maps of HI can be obtained using vector quantization. Given two vectors  $\x$ and $\x'$ of dimension $s$,  the HI on $\x$, $\x'$  is defined as $\kappa_p^{(1)}(\x,\x') =  \sum_{d=1}^s \min(\x^{d},\x'^d)$ (with $\x^{d}$ being the value of $d^{th}$ dimension of $\x$). Considering $\x=(\x^1,\dots,\x^s)^{{\top}} \in {\cal X}$, each dimension $\x^d$ of $\x$ is mapped to 
\begin{equation}
\psi(\x^d) = 2^0 + 2^1 + \dots + 2^{k(\x^d)},   
\end{equation} 
\noindent where $k(\x^d)=\displaystyle \bigg\lfloor Q \frac{\x^d -\mn_d}{\mx_d-\mn_d} \bigg\rfloor$ and $\lfloor z \rfloor$ stands for the largest integer not greater than $z \in \mathbb{R}$, $Q \in \mathbb{N}^+$ is a predefined quantization, ${\mn}_d=\min_{\x} \{ \x^d : \x \in {\cal X}\}$ and ${\mx}_d=\max_{\x} \{ \x^d :  \x \in {\cal X}\}$. In the above definition,  $\psi(.)$ is a ``decimal-to-unary'' map; for instance, $1$ is mapped to $1$, $2$ is mapped to $11$, $3$ to $111$, and so on. In the following, $\psi(\x^d)$ is rewritten as a vector of $Q$ dimensions, and its first $k(\x^d)$ dimensions are set to $1$ and the remaining are set to $0$. 

\begin{proposition} 
Given any $\x$, $\x'$ in $\cal X$, for sufficiently large $Q$, the inner product $\langle \hat{\phi}_p^{(1)}(\x),\hat{\phi}_p^{(1)}(\x')\rangle$ approximates the histogram intersection kernel $\kappa_p^{(1)}(\x,\x')$, where 
{
  \small 
\begin{equation} 
\hat{\phi}_p^{(1)}(\x) = \left(\psi(\x^1)^{{\top}} \sqrt{\frac{\mx_1-\mn_1}{Q}}, \sqrt{\mn_1}, \dots, \psi(\x^s)^{{\top}} \sqrt{\frac{\mx_s-\mn_s}{Q}},\sqrt{\mn_s}\right)^{{\top}}\label{map0}
\end{equation}
}
\noindent is the approximate kernel map and $\psi(\x^d)^\top$ stands for the transpose of $\psi(\x^d)$.

\end{proposition}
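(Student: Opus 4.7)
The plan is to unpack the inner product of the proposed feature map componentwise, exploit the fact that the unary encoding $\psi$ turns minima into dot products, and finally control the floor-quantization error by taking $Q$ large.

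First I would compute $\langle \hat{\phi}_p^{(1)}(\x),\hat{\phi}_p^{(1)}(\x')\rangle$ directly from the definition. Because the map is the concatenation, over dimensions $d=1,\dots,s$, of the block $(\psi(\x^d)^\top \sqrt{(\mx_d-\mn_d)/Q}, \sqrt{\mn_d})$, the inner product splits as a sum over $d$:
\begin{equation}
\langle \hat{\phi}_p^{(1)}(\x),\hat{\phi}_p^{(1)}(\x')\rangle = \sum_{d=1}^s \left( \frac{\mx_d-\mn_d}{Q} \, \psi(\x^d)^\top \psi(\x'^d) + \mn_d \right).
\end{equation}

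Next I would observe that $\psi$ is the ``decimal-to-unary'' encoding, so $\psi(\x^d)$ is a $Q$-vector whose first $k(\x^d)$ coordinates equal $1$ and the rest $0$, and similarly for $\psi(\x'^d)$. Consequently $\psi(\x^d)^\top \psi(\x'^d) = \min(k(\x^d),k(\x'^d))$, which is the analogue of the classical observation that histogram intersection is recovered from dot products of unary codes. Substituting this identity gives
\begin{equation}
\langle \hat{\phi}_p^{(1)}(\x),\hat{\phi}_p^{(1)}(\x')\rangle = \sum_{d=1}^s \left( \frac{\mx_d-\mn_d}{Q} \, \min(k(\x^d),k(\x'^d)) + \mn_d \right).
\end{equation}

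The third step is to handle the quantization. By the definition of $k$, $\lfloor Q (\x^d-\mn_d)/(\mx_d-\mn_d)\rfloor \cdot (\mx_d-\mn_d)/Q$ differs from $\x^d-\mn_d$ by at most $(\mx_d-\mn_d)/Q$, and the floor commutes with the min (both $k(\x^d),k(\x'^d)$ are floors of monotone functions of $\x^d,\x'^d$). Hence
\begin{equation}
\frac{\mx_d-\mn_d}{Q}\,\min(k(\x^d),k(\x'^d)) \;=\; \min(\x^d,\x'^d) - \mn_d + \varepsilon_d(Q),
\end{equation}
with $|\varepsilon_d(Q)| \leq (\mx_d-\mn_d)/Q$. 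Plugging this into the previous expression, the $-\mn_d$ cancels exactly the additive $+\mn_d$ coming from the $\sqrt{\mn_d}$ coordinates, leaving
\begin{equation}
\langle \hat{\phi}_p^{(1)}(\x),\hat{\phi}_p^{(1)}(\x')\rangle = \sum_{d=1}^s \min(\x^d,\x'^d) + \sum_{d=1}^s \varepsilon_d(Q) = \kappa_p^{(1)}(\x,\x') + O\!\left(\tfrac{1}{Q}\sum_d (\mx_d-\mn_d)\right),
\end{equation}
which tends to $\kappa_p^{(1)}(\x,\x')$ as $Q\to\infty$, establishing the claim.

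The only subtle point — and the one I would be careful with — is the min/floor commutation argument in step three, since one must check that scaling the unary codes by $\sqrt{(\mx_d-\mn_d)/Q}$ and appending $\sqrt{\mn_d}$ properly converts a \emph{quantized} intersection $\min(k(\x^d),k(\x'^d))$ into the \emph{continuous} intersection $\min(\x^d,\x'^d)$. Everything else is routine bookkeeping once the unary-encoding identity is written down.
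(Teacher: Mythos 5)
Your proposal is correct and follows essentially the same route as the paper's proof: decompose the inner product dimension-wise, use the unary-encoding identity $\psi(\x^d)^\top\psi(\x'^d)=\min(k(\x^d),k(\x'^d))$, exploit monotonicity of the floor so that the min of the quantized values equals the quantized min, and bound the residual by $\frac{1}{Q}\sum_d(\mx_d-\mn_d)$ via $|\lfloor z\rfloor - z|\leq 1$. No substantive difference from the paper's argument.
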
 
\begin{proof}
$\forall i, j \in \{1,\dots,m\}$, 
\begin{equation}
\begin{array}{lll}
 \langle \hat{\phi}_p^{(1)}(\x_i),\hat{\phi}_p^{(1)}(\x_j) \rangle &  = &  \langle \psi(\x_i^1),\psi(\x_j^1)\rangle (\frac{\mx_1-\mn_1}{Q}) +   \mn_1 +  \dots  \\ 
    &  + &   \langle \psi(\x_i^s),\psi(\x_j^s)\rangle (\frac{\mx_s-\mn_s}{Q}) +   \mn_s 
\end{array}\label{eq1111} 
\end{equation} 
It is easy to see that $\forall  d \in \{1,\dots,s\}$,  $\langle \psi(\x_i^d),\psi(\x_j^d)\rangle = \min(k(\x_i^d),k(\x_j^d))$. By replacing in Eq.~(\ref{eq1111})
\begin{equation}
\small
\begin{array}{lll}
& \bigg| \langle  \hat{\phi}_p^{(1)}(\x_i),\hat{\phi}_p^{(1)}(\x_j) \rangle - \kappa_p^{(1)}(\x_i,\x_j)\bigg|   &  \\ 
      = & \displaystyle  \bigg| \sum_{d=1}^s \min(k(\x_i^d),k(\x_j^d)) \frac{\mx_d-\mn_d}{Q} +   \mn_d - \min(\x_i^d,\x_j^d)\bigg|  &  \\ 
      \leq   & \displaystyle\sum_{d=1}^s \bigg| \bigg\lfloor Q \frac{\min(\x_i^d,\x_j^d) -\mn_d}{\mx_d-\mn_d} \bigg\rfloor \frac{\mx_d-\mn_d}{Q} +   \mn_d  - \min(\x_i^d,\x_j^d)\bigg| &  \\ 
      =  & \displaystyle\sum_{d=1}^s \frac{\mx_d-\mn_d}{Q} \bigg| \bigg\lfloor Q \frac{\min(\x_i^d,\x_j^d) -\mn_d}{\mx_d-\mn_d} \bigg\rfloor -   Q \frac{\min(\x_i^d,\x_j^d) -\mn_d}{\mx_d-\mn_d}  \bigg| &   \\
      \leq  &  \displaystyle \frac{1}{Q}\sum_{d=1}^s \mx_d-\mn_d, \ \   (\textrm{as} \ \  | \lfloor z \rfloor - z | \leq 1), 
\end{array} 
\end{equation} 
as $Q$ increases, $\bigg| \langle \hat{\phi}_p^{(1)}(\x_i),\hat{\phi}_p^{(1)}(\x_j) \rangle - \kappa_p^{(1)}(\x_i,\x_j)\bigg| \leadsto 0$ 
\end{proof} 

\noindent {\bf Approximate Gaussian kernel map.} As the explicit map of the Gaussian kernel is  infinite dimensional, we consider instead an approximate explicit map of  that kernel using {eigen decomposition (ED)} as shown in Eqs.~(\ref{equa:projection}), (\ref{equ:eigenvalueproblem}) with $l=1$ (see Section~\ref{sec:intermediatelayer}). This ED is not restricted to the Gaussian kernel and can also be extended to other kernels whose exact explicit maps are difficult to obtain.

\subsection{Intermediate/output layer maps} \label{sec:intermediatelayer}
 
\noindent Given the explicit map of each elementary kernel at the input layer, our goal is to design the maps of the subsequent layers. Since the map of each layer depends on its preceding layers, this goal is achieved layer-wise  using a greedy process. As intermediate/output kernels in the DKN are defined as linear combinations of kernels in the preceding (input or intermediate) layers followed by nonlinear activations, we mainly focus on how to approximate the maps of these activation functions in the DMN; in this section, we assume that weights $\{{\bf w}_{p,q}^{(l)}\}$ connecting different layers are already known resulting from the initial setting of the DKN (see again Section~\ref{sec:dkn}).

\begin{proposition}

  Let ${\cal S}=\{\x_i\}_{i=1}^N$ be a subset of $N$ samples of $\cal X$, and let ${\bf K}_p^{l}$ be a gram-matrix whose entries are defined on $\cal S$. Let $\U_p^{(l)}=\alphaa \A^{-1\slash2}$ with $\alphaa$, $\boldsymbol{\Lambda}$ being respectively the matrices of eigenvectors and eigenvalues obtained by solving
\begin{equation}
{\bf K}_p^l {\alphaa}  = {\alphaa} \A, 
\label{equ:eigenvalueproblem}
\end{equation}
Considering $\|.\|_2$ as the $\ell_2$ (matrix) norm and $\hat{\bf K}_p^l$ as the gram-matrix associated to $\{\langle \hat{\phi}_p^{(l)}(\x), \hat{\phi}_p^{(l)}(\x')\rangle\}_{\x,\x' \in {\cal S}}$ with 
\begin{equation}
\small 
  \hat{\phi}_p^{(l)}(\x)^{{\top}}= \bigg(g(\langle \hat{\phi}_p^{l,c}(\x), \hat{\phi}_p^{l,c}(\x_1) \rangle) \dots g(\langle \hat{\phi}_p^{l,c}(\x), \hat{\phi}_p^{l,c}(\x_N) \rangle)\bigg) \U_p^{(l)} 
\label{equa:projection}
\end{equation}
and 
\begin{equation}
  \small 
\hat{\phi}_p^{l,c}(\x) =\bigg(\sqrt{\w_{{p,1}}^{(l-1)}} \hat{\phi}_1^{(l-1)}(\x)^{{\top}} \cdots \sqrt{{\w^{(l-1)}_{p,n_{l-1}}}} \hat{\phi}_{n_{l-1}}^{(l-1)}(\x)^{{\top}}\bigg)^{{\top}},
\label{equ:fullfeature}
\end{equation}
\noindent  then the following property is satisfied \begin{equation}\big\|\hat{\bf K}_p^l - {\bf K}_p^l\big\|_2=0.\end{equation} 
\end{proposition}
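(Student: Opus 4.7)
The plan is to recognise the construction as a Nyström-style reconstruction. Concretely, define the row vector $\vec{k}_p^l(\x) = \bigl(g(\langle \hat{\phi}_p^{l,c}(\x),\hat{\phi}_p^{l,c}(\x_1)\rangle),\ldots,g(\langle \hat{\phi}_p^{l,c}(\x),\hat{\phi}_p^{l,c}(\x_N)\rangle)\bigr)$, so that by Eq.~(\ref{equa:projection}) we have $\hat{\phi}_p^{(l)}(\x)^{\top}=\vec{k}_p^l(\x)\,\U_p^{(l)}$. The first step is to argue that, for the training anchors $\x_i\in\cal S$, the vector $\vec{k}_p^l(\x_i)$ coincides exactly with the $i$-th row of ${\bf K}_p^l$. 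This would be used inductively: assuming the previous-layer maps $\hat{\phi}_q^{(l-1)}$ already reproduce $\kappa_q^{(l-1)}$ on $\cal S$, the concatenation in Eq.~(\ref{equ:fullfeature}) gives $\langle \hat{\phi}_p^{l,c}(\x_i),\hat{\phi}_p^{l,c}(\x_j)\rangle=\sum_q \w_{p,q}^{(l-1)}\kappa_q^{(l-1)}(\x_i,\x_j)$ for $\x_i,\x_j\in\cal S$, and applying $g$ yields the entry $({\bf K}_p^l)_{ij}$ by the DKN recursion.

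Next I would plug this into the inner product: for $\x_i,\x_j\in\cal S$,
\begin{equation}
\langle \hat{\phi}_p^{(l)}(\x_i),\hat{\phi}_p^{(l)}(\x_j)\rangle = {\bf K}_{i,\cdot}\,\U_p^{(l)}(\U_p^{(l)})^{\top}{\bf K}_{\cdot,j},
\end{equation}
where ${\bf K}={\bf K}_p^l$. Using $\U_p^{(l)}=\alphaa\A^{-1/2}$ gives $\U_p^{(l)}(\U_p^{(l)})^{\top}=\alphaa\A^{-1}\alphaa^{\top}$. From the symmetric eigendecomposition in Eq.~(\ref{equ:eigenvalueproblem}), $\alphaa$ can be taken orthonormal ($\alphaa^{\top}\alphaa=I$) and ${\bf K}=\alphaa\A\alphaa^{\top}$. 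Then ${\bf K}\,\alphaa=\alphaa\A$ implies ${\bf K}_{i,\cdot}\alphaa\A^{-1}=\alphaa_{i,\cdot}$, so the right-hand side collapses to $\alphaa_{i,\cdot}\alphaa^{\top}{\bf K}_{\cdot,j} = (\alphaa\alphaa^{\top}{\bf K})_{ij}={\bf K}_{ij}$, from which $\hat{\bf K}_p^l={\bf K}_p^l$ and hence $\|\hat{\bf K}_p^l-{\bf K}_p^l\|_2=0$ follows immediately.

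The main obstacle I foresee is the inductive hypothesis at the previous layer: the proof as stated is per-layer, but the expression $\hat{\phi}_p^{l,c}$ in Eq.~(\ref{equ:fullfeature}) uses the (in general only approximate) maps $\hat{\phi}_q^{(l-1)}$ from layer $l-1$. Step one above requires that these previous-layer maps reproduce their gram-matrices exactly on $\cal S$, which is precisely the induction carried by this very proposition starting from the input layer; for elementary kernels with exact maps (polynomial) this is immediate, while for those with approximate maps (Gaussian, HI) it holds on $\cal S$ because the same eigendecomposition construction is applied there as well. A secondary technical point is rank-deficiency of ${\bf K}_p^l$: if $\A$ is singular one must restrict $\alphaa,\A$ to the nonzero spectrum and interpret $\A^{-1}$ as a pseudoinverse; the identity $\alphaa\alphaa^{\top}{\bf K}={\bf K}$ still holds because $\alphaa\alphaa^{\top}$ is the orthogonal projector onto the range of ${\bf K}$, and this is the only place the argument needs any care.
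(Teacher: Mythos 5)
Your proposal is correct and follows essentially the same route as the paper's proof: a layer-wise induction showing that the $g$-activated inner products of the concatenated maps reproduce the kernel row $(\kappa_p^{(l)}(\x,\x_1),\dots,\kappa_p^{(l)}(\x,\x_N))$ on $\cal S$, followed by the algebraic collapse $\hat{\bf K}_p^l={\bf K}_p^l\,\U_p^{(l)}\U_p^{(l)\top}{\bf K}_p^l={\bf K}_p^l$ using ${\bf K}_p^l\alphaa=\alphaa\A$ and the orthogonality of $\alphaa$. Your remark on handling a singular $\A$ via the pseudoinverse and the projector $\alphaa\alphaa^{\top}$ is a small refinement the paper glosses over, but it does not change the argument.
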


\begin{proof}
  Let's proceed layer-wise by induction; for $l=1$ (and following section~\ref{sec:inputlayer}), the initial kernel maps  $\{\hat{\phi}_p^{(1)}(.)\}$ are designed to satisfy   $\hat{\phi}_p^{(1)}(\x)^{{\top}} \hat{\phi}_p^{(1)}(\x')=\kappa_p^{(1)}(\x,\x')$. \\
  
\noindent  Now provided that $\hat{\phi}_q^{(l-1)}(\x)^{{\top}} \hat{\phi}_q^{(l-1)}(\x')=\kappa_q^{(l-1)}(\x,\x')$, the property to show is $\hat{\phi}_p^{(l)}(\x)^{{\top}} \hat{\phi}_p^{(l)}(\x')=\kappa_p^{(l)}(\x,\x')$,  $\forall \x \in {\cal S}$.  Following (\ref{equ:fullfeature}) we have
\begin{equation}
  \begin{array}{lll}
    \displaystyle \langle \hat{\phi}_p^{l,c}(\x),\hat{\phi}_p^{l,c}(\x') \rangle &=& \displaystyle  \sum_{q=1}^{n_{l-1}} \w_{p,q}^{(l-1)}  \ \hat{\phi}_q^{(l-1)}(\x)^{{\top}} \hat{\phi}_q^{(l-1)}(\x') \\
                                                                       & = &   \displaystyle  \sum_{q=1}^{n_{l-1}} \w_{p,q}^{(l-1)}  \ \kappa_q^{(l-1)}(\x,\x'),                                                                        
\end{array}\label{eq:00}
    \end{equation} 

    \noindent the second equality results from the hypothesis of induction. By plugging (\ref{eq:00}) into (\ref{equa:projection}), we obtain
    \begin{equation}
\hat{\phi}_p^{(l)}(\x)^{{\top}} =  \big(\kappa_p^{(l)}(\x,\x_1),\dots,\kappa_p^{(l)}(\x,\x_N)\big) \U_p^{(l)}, 
\label{equa:projection2}
\end{equation}
and equivalently  $\hat{\bf K}_p^l={\bf K}_p^l \ \U_p^{(l)} \ \U_p^{(l)\top} \ {\bf K}_p^l$. Hence, 
\begin{equation} 
\begin{array}{lll}
  \big\|\hat{\bf K}_p^l - {\bf K}_p^l\big\|_2 & = &  \big\|{\bf K}_p^l \ \alphaa \A^{-1\slash2} \  \A^{-1\slash2} \ \alphaa^{\top} \ {\bf K}_p^l -{\bf K}_p^l\big\|_2  \\
   & & \\
                 & = & \big\|{\alphaa} \A \  \A^{-1} \ \alphaa^{\top} \ {\bf K}_p^l - {\bf K}_p^l\big\|_2  = 0,                 
\end{array}
\end{equation} 
which also results from Eq.~(\ref{equ:eigenvalueproblem}) and the orthogonality of eigenvectors in $\alphaa$.
\end{proof} 
 
Note that for any samples $\x$, $\x'$ taken out of $\cal S$ (but with similar distribution as $\cal S$), it is clear (as also observed in our experiments) that $|\langle \hat{\phi}_p^{(l)}(\x), \hat{\phi}_p^{(l)}(\x')\rangle - \kappa_p^{(l)}(\x,\x')| \leadsto 0$ as $N$ and the number of eigenvectors used in $\{\U_p^{(l)}\}$ increase.  

\subsection{Network design} \label{sec:networkdesign}

We incrementally expand each layer $l$  in the DKN into three sub-layers in the underlying DMN in order to design the map $\hat{\phi}_p^{(l)}$.  The first sub-layer provides the products between weights $\{(\w_{p,q}^{(l-1)})^{1\slash2}\}_q$ and the preceding maps $\{\hat{\phi}_q^{(l-1)}(\x)\}_q$ resulting into the intermediate map  $\hat{\phi}_p^{l,c}(\x)$ as shown in Eq.~(\ref{equ:fullfeature}). Afterwards, we feed this map $\hat{\phi}_p^{l,c}(\x)$ to Eq.~(\ref{equa:projection}) in two steps: (i) in the second sub-layer, inner products are achieved between $\hat{\phi}_p^{l,c}(\x)$ and parameters  $\{\hat{\phi}_p^{l,c}(\x_i)\}_{i=1}^N$ followed by the activations $\{g(.)\}_{i=1}^N$ (with $g$ being the hyperbolic excepting the final layer in the DKN which uses the exponential); (ii) in the third sub-layer, the explicit map   $\hat{\phi}_p^{(l)}$ is obtained as the product of $\{g(.)\}_{i=1}^N$ and weights $\U_p^{(l)}$. Fig. (\ref{fig:dmn}, right) shows these three sub-layers in the DMN. Similarly, all the subsequent layers in the DMN are designed by processing the DKN layer-wise.\footnote{As the goal, in this paper, is to build approximate deep kernel maps for a given (fixed) deep kernel network, the weights $\mathbf{w}$ between different layers  remain fixed (as shown in Eq.~(\ref{equ:fullfeature})). However, they can also be jointly learned using gradient descent, but this is out of the main scope of this paper.}

\section{Enhancing DMN Parameters} \label{sec:unsupervisedlearning}
So far the design principle of our method (shown in Section \ref{sec:networkdesign} and Fig.~\ref{fig:dmn}) seeks to find explicit maps whose inner products approximate the original kernel values. This is achieved by expanding each layer in the DKN into three sub-layers in the DMN with parameters fixed to $\{\hat{\phi}_p^{l,c}(\x_i)\}_i$ and $\U_p^{(l)}$. In spite of being efficient and also effective w.r.t. the DKN (see experiments), the resulting DMN can be further improved when re-training and fine-tuning these parameters as shown subsequently.

\def\p{{\bf p}}
\def\q {{\bf q}}

The purpose of the proposed unsupervised algorithm is to further reduce the approximation error between the kernel values from DKN and the inner product of kernel maps from DMN. Let ${\cal S}' \subset {\cal X}$ be a subset drawn from the same distribution as $\cal S$ and define $\mathcal{P}$ as a subset of pairs taken from ${\cal S}' \times {\cal S}'$. Our goal is to optimize maps of DMN using the following unsupervised criterion
\begin{equation}
 E = \sum_{(\x, \x') \in {\cal P}} \frac{1}{2} \big\| \hat{\phi}_1^{(L)} (\x)^{\top} \hat{\phi}_1^{(L)}(\x') - \kappa_1^{(L)}(\x, \x')\big\|^2,
 \label{equa:lossfunction}
\end{equation}
here $\kappa_1^{(L)}(\x,\x')$  corresponds to the kernel value obtained using the DKN and $\hat{\phi}_1^{(L)}(\x)$, $\hat{\phi}_1^{(L)}(\x')$ 
are the underlying (unknown) kernel maps;  initially, only  $\{\hat{\phi}_p^{(1)}(\x), \hat{\phi}_p^{(1)}(\x')\}_{(\x,\x') \in {\cal P}}$ are known according to the procedure shown in Section~\ref{sec:inputlayer}.

Considering the initial setting of DMN parameters (i.e., $\{\hat{\phi}_p^{l,c}(\x_i)\}_i$ and $\U_p^{(l)}$), the learning process of this DMN relies on backpropagation~\cite{LeCun98}. The latter finds the best parameters by minimizing the objective function  ($E$) following  an ``end-to-end'' framework where the gradients of $E$ are given using the chain rule;  we firstly compute the gradients of the loss function $E$ w.r.t. final kernel maps, then we backpropagate them through the DMN in order to obtain the gradients w.r.t. the parameters of DMN, finally we average them over training pairs to obtain the descent direction and  update DMN parameters. 

Starting from the derivative of $E$ w.r.t. $\hat{\phi}_1^{(L)}(\x)$
\begin{equation}
 \frac{\partial E}{\hat{\phi}_1^{(L)}(\x)} =  \big(  \hat{\phi}_1^{(L)}(\x)^{\top} \hat{\phi}_1^{(L)}(\x') - \kappa_1^{(L)}(\x,\x') \big) \hat{\phi}_1^{(L)}(\x'),
 \label{equa:gradientfinallayer}
\end{equation}
\noindent we obtain the gradients w.r.t. different layers $l=L,\dots,1$ and units $p=1,\dots,n_l$ as shown in the following section.

\subsection{Error backpropagation} \label{sec:unsuperviseddetails}
As the construction of DMN is achieved layer-wise  (see again Section~\ref{sec:networkdesign}), we show below the backpropagation procedure for a module (shown in Fig.~\ref{fig:dmn}, right). Given the derivatives of $E$ w.r.t. $\hat{\phi}_p^{(l)}(\x)$ in layer $l$, we evaluate  the derivatives w.r.t. $\hat{\phi}_q^{(l-1)}(\x)$ in layer $(l-1)$. The derivative w.r.t. $\hat{\phi}_p^{(l)}(\x)$ is  backpropagated to $\kappa_p^{(l)}$ in Eq.~(\ref{equa:projection}) by
\textcolor{black}{
\begin{equation}
\frac{\partial E}{\partial \kappa_p^{(l)}(\x,\x_i)} = ( \frac{\partial E}{\partial \hat{\phi}_p^{(l)}(\x)} )^{\top}   [\U_p^{(l)}]_i^{\top},
\label{equa:gradientkernel}
\end{equation}}

\noindent here $[.]_i$ stands for the ith row of a matrix. Considering  $\kappa_p^{(l)}(\x,\x_i) = g \big(f_p^{(l)}(\x,\x_i) \big)$,
with $f_p^{(l)}(\x,\x_i)  =  \langle \hat{\phi}_p^{l,c}(\x), \hat{\phi}_p^{l,c}(\x_i) \rangle$, we obtain 
\begin{equation}
 \frac{\partial E}{\partial f_p^{(l)}(\x,\x_i)} = g'(f_p^{(l)}(\x,\x_i)) \ \frac{\partial E}{\partial \kappa_p^{(l)}(\x,\x_i)},
\end{equation}
\noindent where $g'(\cdot)$ is the derivative of the nonlinear activation function; for instance, $g'(\cdot) = 1-\textrm{tanh}(\cdot)^2$ for the tangent hyperbolic  and $g'(\cdot)=g(\cdot)$ for the exponential. By accumulating the derivatives from each term $f_p^{(l)}(\x, \x_i)$, we obtain
\begin{equation}	
\frac{\partial E}{\hat{\phi}_p^{l,c}(\x)} =  \sum_{i=1}^N \hat{\phi}_p^{l,c}(\x_i) \frac{\partial E}{\partial f_p^{(l)}(\x,\x_i)},
\end{equation}
Finally, we get the derivatives w.r.t. $\hat{\phi}_q^{(l-1)}(\x)$ for layer $(l-1)$ in Eq.~(\ref{equ:fullfeature}) by
\begin{equation}
 \frac{\partial E}{\partial \hat{\phi}_q^{(l-1)}(\x)} = \sqrt{\w_{p,q}^{(l-1)}} \textrm{Frag}( \frac{\partial E}{\hat{\phi}_p^{l,c}(\x)} )_q,
 \label{equa:gradientprevisoulayer}
\end{equation}
\noindent where $\textrm{Frag}( \frac{\partial E}{\hat{\phi}_p^{l,c}(\x)} )_q$ stands for the fragment of derivatives corresponding to the kernel maps of the unit $q$ at layer $(l-1)$ in the DKN. 

The gradients of the loss function $E$ w.r.t. $\U_p^{(l)}$ and $\hat{\phi}_p^{l,c}(\x_i)$ are then given as

\textcolor{black}{
\begin{equation}	
  \Delta \U_p^{(l)} = (\kappa_p^{(l)}(\x,\x_1)\dots \kappa_p^{(l)}(\x,\x_N))^{\top} \  ( \frac{\partial E}{\partial \hat{\phi}_p^{(l)}(\x)} )^{\top}
\label{equa:gradientevd}
\end{equation}
}

\begin{equation}
 \Delta \hat{\phi}_p^{l,c}(\x_i) = \frac{\partial E}{\partial f_p^{(l)}(\x, \x_i)} \hat{\phi}_p^{l,c}(\x).
 \label{equa:gradientexamples}
\end{equation}

Error backpropagation is achieved layer-wise  from the final to the input layer; the increments of $\{\hat{\phi}_p^{l,c}(\x_i)\}_{i=1}^N$ and $\U_p^{(l)}$ are obtained by Eq.~(\ref{equa:gradientexamples}) and Eq.~(\ref{equa:gradientevd}). Gradient descent with a step $\eta$ (see experiments) is performed to update the parameters of DMN. The whole learning procedure is shown in Algorithm~\ref{algo:unsupervisedlearning}.

As described earlier, an initial DMN is firstly  set using the training set $\mathcal{S}$, then  sample pairs in  $\mathcal{P}$ are randomly selected from $\mathcal{S}'$ to further enhance the parameters of the new (fine-tuned) DMN.  As a result, the fine-tuned DMN enables us to obtain a better approximation of the original DKN on large datasets while being highly efficient as shown through the following experiments in \textcolor{black}{image annotation}. 
 
\begin{algorithm}[t]
\small
\caption{Unsupervised DMN learning algorithm}
\label{algo:unsupervisedlearning}
\BlankLine
\KwIn{Fixed $\{\mathbf{w}_{p,q}^{(l-1)}\} \  (l=2, \ldots, L)$, \\
A set of sample pairs $\mathcal{P}$, \\
Kernel maps $\{\hat{\phi}_p^{(1)}(\x)\}_{\x \in {\cal S}'}$ at the input layer, \\
Output kernel values $\{ \kappa_1^{(L)}(\x,\x')\}_{(\x,\x') \in {\cal P}}$. \\
Initialization: $\{\hat{\phi}_p^{l,c}(\x_i)\}_{i=1}^N$ and $\{\U_p^{(l)}\}$, $p=\{1, \ldots, n_l\}$, learning rate $\eta$.}
\KwOut{Optimal (updated) $\{\hat{\phi}_p^{l,c}(\x_i)\}_{i=1}^N$ and $\{\U_p^{(l)}\}$. } 

\Repeat{Convergence}
{
    \For{ each pair $(\x,\x') \in {\cal P}$}
    {Forward $(\hat{\phi}_p^{(1)}(\x), \hat{\phi}_p^{(1)}(\x'))$ through DMN to obtain $(\hat{\phi}_p^{(L)}(\x), \hat{\phi}_p^{(L)}(\x'))$by Eqs.~(\ref{equ:fullfeature}), (\ref{equa:projection})\;
    Compute the loss by Eq.~(\ref{equa:lossfunction})\;
    Compute the gradients $\frac{\partial E}{\hat{\phi}_1^{(L)}(\x)}$ by Eq.~(\ref{equa:gradientfinallayer})\;
    
	\For{$l=L:2$}
	{
	Backward the gradients $\frac{\partial E}{\hat{\phi}_1^{(L)}(\x)}$ by Eqs.~(\ref{equa:gradientkernel})-(\ref{equa:gradientprevisoulayer})\;
	
	Compute $(\Delta U_p^{(l)})_{\x}$ and $(\Delta \hat{\phi}_p^{l,c}(\x_i))_{\x}$ by Eq.~(\ref{equa:gradientevd}) and (\ref{equa:gradientexamples})\;  
	
        Compute the gradients from $\hat{\phi}_p^{(l)}(\x')$: $(\Delta U_p^{(l)})_{\x'}$ and $(\Delta \hat{\phi}_p^{l,c}(\x_i))_{\x'}$\;	
    
        Average both gradients: $\Delta U_p^{(l)}\leftarrow \frac{1}{2}\big((\Delta U_p^{(l)})_{\x}+(\Delta U_p^{(l)})_{\x'}\big)$\;
        $\Delta \hat{\phi}_p^{l,c}(\x_i) \leftarrow \frac{1}{2}\big((\Delta \hat{\phi}_p^{l,c}(\x_i))_{\x}+(\Delta \hat{\phi}_p^{l,c}(\x_i))_{\x'}\big)$;
    
        Update these parameters by gradient descents\;
        $ U_p^{(l)} \leftarrow  U_p^{(l)} - \eta \Delta U_p^{(l)}$ \;
        $\hat{\phi}_p^{l,c}(\x_i) \leftarrow \hat{\phi}_p^{l,c}(\x_i) - \eta \Delta \hat{\phi}_p^{l,c}(\x_i)$\;	
	}
    }
   
}
\end{algorithm}

\section{Experiments} \label{sec:experiments}
In this section, we compare the performance of the proposed DMN w.r.t. its underlying DKN in three aspects: i) discrimination power, ii) relative approximation error between DMN and DKN and iii) also efficiency.  The targeted task is image annotation (e.g.,~\cite{sahbi2013cnrs,li2011superpixel}); given a picture, the goal is to predict a list of keywords that best describes the visual content of that image. We consider two challenging and widely used benchmarks:  ImageCLEF~\cite{Villegas2013} and COREL5k~\cite{Duygulu2002} (see details below). For both sets, we learn -- highly competitive -- 3-layer DKNs using the setting in~\cite{Jiu2015} and we plug these DKNs into SVMs in order to achieve image classification and annotation. 

The discrimination power of the learned DMN and DKN networks is measured following the protocol defined by challenge organizers and data providers (see \cite{Villegas2013} for ImageCLEF and \cite{Duygulu2002} for COREL5k; see also extra details below). The relative approximation error (RE) of a given DMN w.r.t. its underlying  DKN is measured (on a given set ${ \cal T}  \subset {\cal X}$) as
\begin{equation}
\small
 \textrm{RE} = \frac{1}{|{\cal T}|^2} \sum_{\x,\x' \in {\cal T}}  \frac{|{\langle \hat{\phi}_1^{(3)}(\x),\hat{\phi}_1^{(3)}(\x')\rangle}-\kappa_1^{(3)}(\x, \x')|}{|\langle \hat{\phi}_1^{(3)}(\x),\hat{\phi}_1^{(3)}(\x')\rangle|+|\kappa^{(3)}_1(\x, \x')|} \times 100\%,
\end{equation}
\noindent In the remainder of this section,  we show different evaluation measures (discrimination power,  RE and efficiency) on ImageCLEF and COREL5k benchmarks;  note that efficiency was measured on a Mac OS with Intel Core i5 processors.

\subsection{ImageCLEF benchmark} \label{sec:imageclef}
The ImageCLEF Photo Annotation benchmark~\cite{Villegas2013} includes more than 250k (training, dev and test) images belonging to 95 different concepts. As ground truth is available (released) only on the dev set (with 1,000 images), we learn DKNs and SVMs~\cite{Jiu2015} using only the dev set; the latter is split into two subsets: the first one used for DKN+SVM training while the other one for SVM testing. Given a concept and a test image, the decision about whether that concept is present in that test image depends on the score of a classifier; the latter corresponds to a ``one-versus-all'' SVM that returns a positive score if the concept is present in the test image and a negative score otherwise. The discrimination power of DKN and DMN (when combined with SVMs) is evaluated using the F-measure (defined as  harmonic means of recalls and precisions) both at the concept and the image levels (resp. denoted  MF-C and MF-S) as well as the Mean Average Precision (MAP) \cite{Villegas2013}; high values of these measures imply better performances. 

\indent In order to feed the inputs of DKN, we consider a combination of 10 visual features (provided by the ImageCLEF challenge organizers) and 4 elementary kernels (i.e. linear, polynomial with 2 orders, Gaussian\footnote{with a scale hyper-parameter set to be average Euclidean distance between data samples and their neighbors.} and histogram intersection) and we train a three-layer DKN with 40 input and 80 hidden units in a supervised way following the scheme in~\cite{Jiu2015}; the only difference w.r.t. \cite{Jiu2015} resides in the hyperbolic tangent activation function which is used to provide a better numerical stability and convergence when training  DKN. 

\noindent {\bf Initial DMNs.} Assuming the weights $\{\w_{p,q}^{(l-1)}\}$ of three-layer DKN known, we build its equivalent DMN (referred to as initial DMN) as shown in Section \ref{sec:dmn}.  In these experiments, we consider two random samplings of the subset  $\cal S$ -- from the dev set with $|{\cal S}|=500$ and $|{\cal S}|=1000$ -- in order to build the initial DMN (see Section~\ref{sec:dmn} and Eqs.~(\ref{equ:fullfeature}), (\ref{equa:projection})). According to Table~\ref{dmnperformance}, we observe that the performance of the initial DMN -- with $|{\cal S}|=500$ -- slightly degrades compared to its underlying DKN;  indeed, MF-S and MF-C decrease by \textcolor{black}{1.3} and \textcolor{black}{2.6} pts respectively while MAP decreases by \textcolor{black}{6.0} pts. With $|{\cal S}|=1000$ performances of the initial DMN is clearly improved compared to the one with $|{\cal S}|=500$; we obtain a slight gain in MF-S and comparable performance in MF-C. \textcolor{black}{We also provide  a comparison of the discrimination power of initial DMN against shallow DKN (i.e two-layer DKN) using a supervised setting; Table~\ref{dmnperformance} clearly shows the superiority of initial DMN (when $|{\cal S}|=1000$).} The relative approximation error (RE) of the two initial DMNs (i.e., with $|{\cal S}|=500$ and $|{\cal S}|=1000$) are also shown in Table~\ref{tab:imageclefRE}; we evaluate these REs on ${\cal T}$ with a cardinality  ranging  from 2,000 to 10,000 samples. From these results, we observe that REs are comparably low on small sets; indeed, with $|{\cal T}|=2,000$, the obtained REs are equal to 0.94\% when $|{\cal S}|=500$ and 0.95\% when $|{\cal S}|=1000$. Higher REs are obtained on larger $\cal T$ and this clearly motivates the importance of fine-tuning in order to make REs (and thereby performances) of the learned DMN stable (and close to the underlying DKN).

\begin{table}[tbp]
	\centering
	\begin{tabular}{c|ccc}
	\hline
	Framework & MF-S & MF-C & MAP \\
	\hline
	\textcolor{black}{2-layer DKN} & 44.96  & 25.77  & 53.95 \\
	3-layer DKN & 46.23 & 30.00 & 55.73 \\
	\hline
	Initial DMN  \  ($|{\cal S}|=500$) & 44.92 & 27.39 & 49.75 \\
	Fine-tuned DMN ($|{\cal S}'|=2000$) & \textbf{45.05} & \textbf{27.51} & \textbf{49.80} \\
	Fine-tuned DMN ($|{\cal S}'|=3000$) & 44.94 & 27.40 & 49.80 \\
	Fine-tuned DMN ($|{\cal S}'|=4000$) & 45.06 & 27.44 & 49.79 \\	
	\hline
	 Initial DMN ($|{\cal S}|=1000$) & 47.73 & 29.40 & 53.15 \\

        Fine-tuned DMN ($|{\cal S}'|=2000$) & 47.79 & 29.68 & 52.89 \\
        Fine-tuned DMN ($|{\cal S}'|=3000$) & \textbf{47.95} & \textbf{29.80} & \textbf{53.32} \\
        Fine-tuned DMN ($|{\cal S}'|=4000$) & 47.70 & 29.30 & 53.33  \\      
	\hline
	\end{tabular}
	\vspace{0.1cm}
    \caption{\textcolor{black}{The \textcolor{black}{discrimination} power (in \%) of different DMNs w.r.t the underlying DKN; in these experiments, two initial DMNs are designed using 500 and 1000 samples.} \label{dmnperformance}}
\end{table}

\begin{table*}[ht]
	\centering
	\begin{tabular}{c|c|ccccccccc}
	\hline
	Configuration & $|{\cal S}'|$ & 2K & 3K & 4K & 5K & 6K & 7K & 8K & 9K & 10K \\
	\hline
	Initial DMN ($|{\cal S}|=500$) & - & 0.94 & 1.25 & 1.41 & 1.51 & 1.58 & 1.62 & 1.66 & 1.69 & 1.71 \\
        \multirow{5}{*}{Fine-tuned DMN}  
                               & 500 & 0.89 & 1.19 & 1.35 & 1.45 & 1.52 & 1.57 & 1.60 & 1.63 & 1.65 \\ 
                               & 1000 & 0.89& 1.20 & 1.36 & 1.46 & 1.53 & 1.58 & 1.61 & 1.64 & 1.66 \\
                               & 2000 & 0.42 & 0.46 & 0.50 & 0.52 & 0.54 & 0.56 & 0.57 & 0.58 & 0.59 \\	
                               & 3000 & 0.52 & 0.47 & 0.47 & 0.47 & 0.47 & 0.47 & 0.48 & 0.48 & 0.48 \\
                               & 4000 & 0.60 & 0.51 & 0.49 & 0.47 & 0.47 & 0.46 & 0.46 & 0.46 & 0.46 \\ 
        \hline
	\hline
        Initial DMN ($|{\cal S}|=1000$)& - & 0.95 & 1.27 & 1.44 & 1.54 & 1.62 & 1.67 & 1.70 & 1.74 & 1.76 \\ 
        \multirow{4}{*}{Fine-tuned DMN} & 1000 & 0.89 & 1.21 & 1.38 & 1.48 & 1.55 & 1.60 & 1.64 & 1.67 & 1.69 \\
                               & 2000 & 0.37 & 0.41 & 0.44 & 0.46 & 0.48 & 0.49 & 0.50 & 0.51 & 0.52 \\
                               & 3000 & 0.46 & 0.43 & 0.43 & 0.43 & 0.44 & 0.44 & 0.44 & 0.45 & 0.45 \\
                               & 4000 & 0.54 & 0.48 & 0.46 & 0.45 & 0.45 & 0.44 & 0.44 & 0.44 & 0.44 \\
        \hline
	\end{tabular}
       \vspace{0.1cm}
	\caption{Relative errors of initial and fine-tuned DMNs w.r.t. the DKN for different dataset cardinalities $|{\cal T}|$ (ranging from 2K to 10K) and when two different initializations are employed. \label{tab:imageclefRE}}
\end{table*}
\begin{figure}[thbp]
\begin{center}
\includegraphics[width=0.7\linewidth]{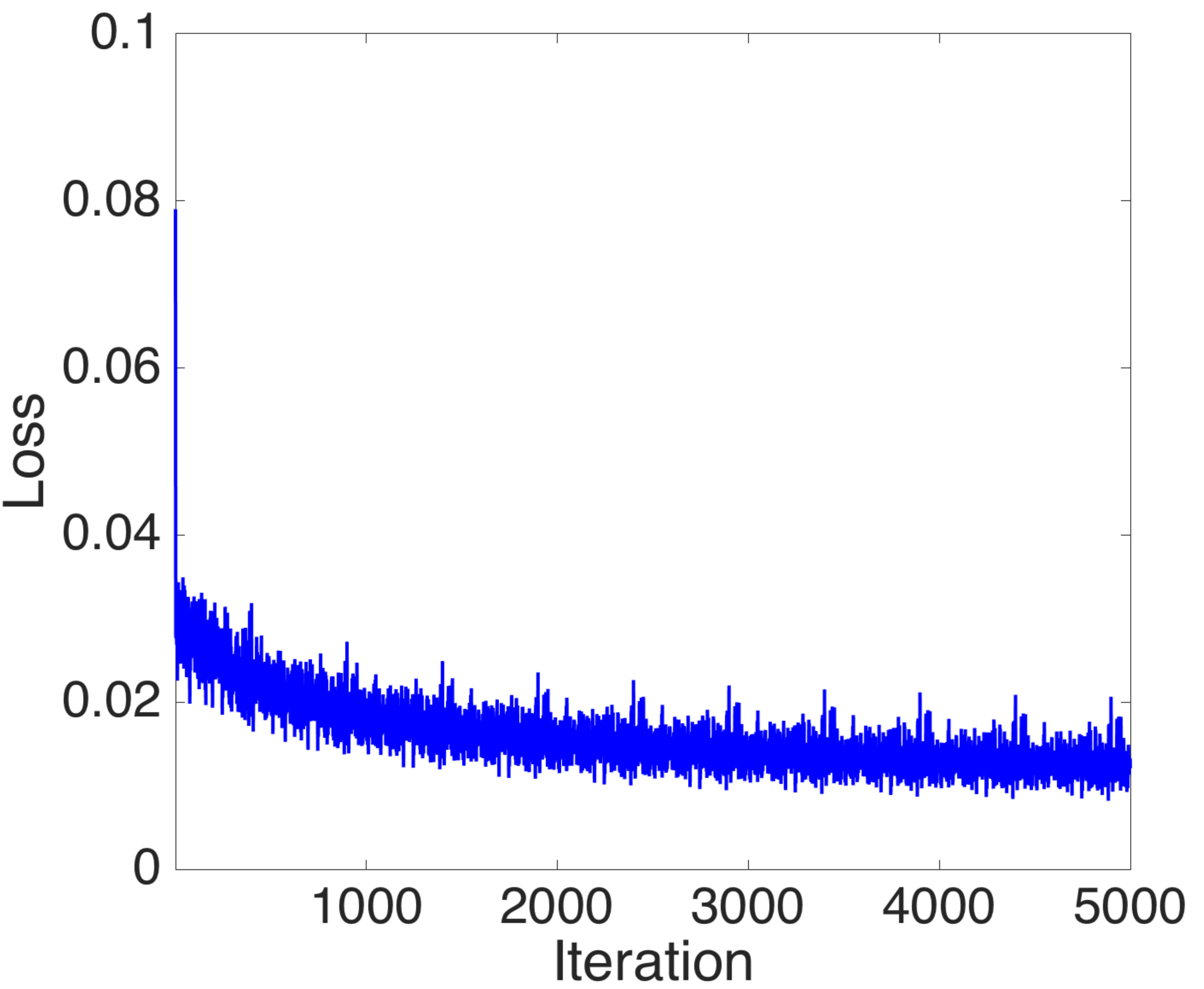}
\end{center}
\caption{\textcolor{black}{This figure shows the loss criterion in Eq.~(\ref{equa:lossfunction}) as the learning iterates when $|{\cal S}|=500$ and $|{\cal S}'|=2000$. }\label{fig:timeresults}} 
\end{figure}

\begin{figure}[thbp]
\begin{center}
\includegraphics[width=0.7\linewidth]{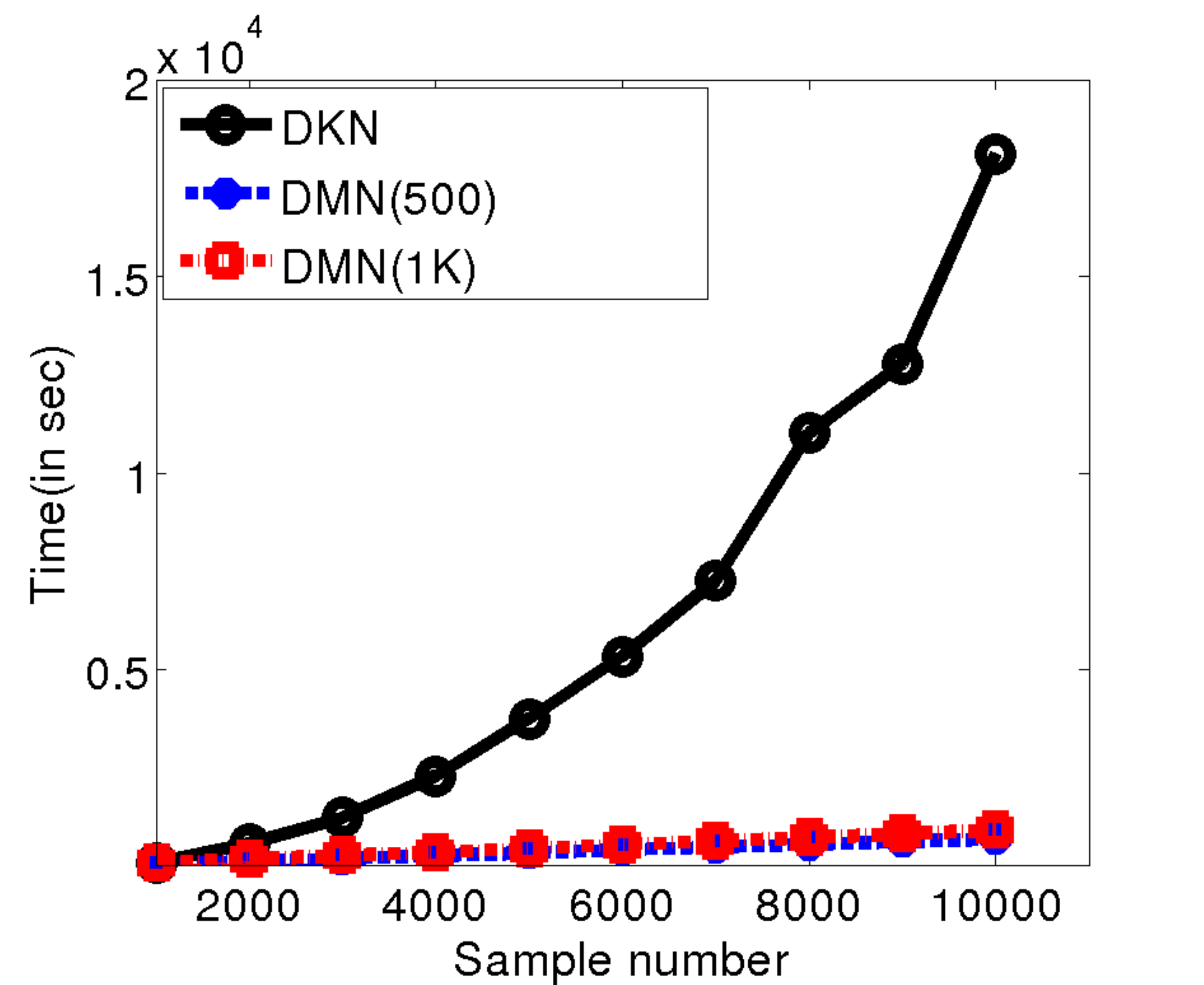}
\end{center}
\caption{\textcolor{black}{This figure shows a comparison of processing time \textcolor{black}{between} two different DMNs and their underlying DKN as $|{\cal T}|$ increases (with $|{\cal S}|=500$ and $|{\cal S}|=1000$) on ImageCLEF dataset.} \label{fig:timeresults-imageclef}} 
\end{figure}

\begin{table}[tbp]
	\centering
	\begin{tabular}{c|ccc}
	\hline
	  & $|{\cal T}|$& 50K & 100K  \\
	\hline
	\textcolor{black}{3-layer} DKN & Time & 40.4 hrs & 160.3 hrs  \\
	\hline
	Fine-tuned DMN & Time & 1.1 hrs &  2.4 hrs  \\
	$|{\cal S}|=500$\-, $|{\cal S}'|=4000$ & RE & 0.46\% & 0.46\%\\
	\hline
      Fine-tuned DMN  & Time & 1.3 hrs  & 2.8 hrs \\        
       $|{\cal S}|=1000$\-, $|{\cal S}'|=4000$ & RE & 0.45\% & 0.45\% \\
	\hline
	\end{tabular}
    \vspace{0.1cm}
    \caption{This table shows a comparison of processing time and relative errors between the DKN and the fine-tuned DMN on $50K$ and $100K$ images of ImageCLEF.``hrs'' stands for ``hours''.\label{dkndmnonlargedata}}
\end{table}
 
\noindent \noindent {\bf Fine-tuned DMNs.} In order to fine-tune the parameters of DMN, we use the learning procedure presented in Section~\ref{sec:unsupervisedlearning}. We consider an unlabeled set ${\cal S}'$ (\textcolor{black}{with $|{\cal S}'|$ ranging from 1,000 to 4,000}) and we sample 100,000 pairs from ${\cal S}' \times  {\cal S}'$  to minimize criterion (\ref{equa:lossfunction}) using gradient descent with a step-size empirically set to $10^{-6}$, a mini-batch size equal to 200 and a max number of iterations set to 5,000 (see Fig.~\ref{fig:timeresults}). \\
\indent As shown in Table~\ref{dmnperformance}, we observe that the discrimination power of different DMNs remains stable (with a slight gain in MF-S when $|{\cal S'}|=3000$) w.r.t. their underlying DKNs, and this naturally follows the noticeably small REs of  the fine-tuned DMNs (see Table~\ref{tab:imageclefRE}). The latter are further positively impacted when $|{\cal S}'|$ becomes larger; for instance, when increasing $|{\cal S}'|$  from 1,000 to 4,000, the RE decreases significantly (particularly when $|{\cal T}|=10,000$). Moreover, and in contrast to the initial DMNs, the fine-tuned DMNs are less sensitive to $|{\cal T}|$ as shown through the observed REs which remain stable w.r.t. $|{\cal T}|$.
 
\indent Finally, we measure the gain in efficiency obtained with DMNs against DKNs. From \textcolor{black}{Fig.~\ref{fig:timeresults-imageclef}}, we observe that DMN is (at least) an order of magnitude faster compared to its DKN; for instance, with 10,000 samples, DKN requires more than 15,000 seconds in order to compute kernel values while  DMN requires  less than 1,000 seconds. Table~\ref{dkndmnonlargedata} also provides a comparison of efficiency and RE on much larger sets (resp.~50K and 100K) randomly sampled from the (unlabeled) training set of ImageCLEF; a significant improvement in efficiency is observed. In other words, the complexity of evaluating DMNs is linear while for DKN it is quadratic. These results clearly corroborate the fact that the proposed DMNs are as effective as DKNs while being highly efficient especially on large scale datasets. 

\subsection{COREL5k benchmark} \label{sec:corel5k}
The COREL5k database introduced in~\cite{Duygulu2002} is another benchmark which is widely used for image annotation. In this database, 4,999 images are collected and a vocabulary of 200 keywords is used for annotation.  This set is split into two parts;  the first one includes 4,500 images for training and the second one 499 images for testing. As for ImageCLEF, the task is again to assign a list of keywords for each image in the test set.\\
\indent Each image in COREL5k is described using 15 types of INRIA features \cite{Guillaumin2009} including:  GIST features, 6 color histograms for RGB, HSV, LAB in two spatial layouts, 8 bag-of-features based on SIFT and robust hue descriptors in two spatial layouts. Following the standard  protocol defined on COREL5k \cite{Duygulu2002}, each test image is annotated with up to 5 keywords and performances (discrimination power of image classification/annotation) are measured by the mean precision  and recall over keywords (referred to as $\mathbf{P}$ and  $\mathbf{R}$ respectively) as well as the number of keywords with non-zero recall value (denoted $\mathbf{N_{+}}$); again, higher values of these measures imply better performances. 
 
\begin{table}[htbp]
	\centering
	\begin{tabular}{c|ccc}
	\hline
	Framework & $\mathbf{R}$ & $\mathbf{P}$ & $\mathbf{N_{+}}$ \\
	\hline
	3-layer DKN & 37.65 & 25.49 & 158 \\
	\hline
	Initial DMN ($|{\cal S}|=500$) & 31.30 & 18.67 & 155 \\
	Fine-tuned DMN ($|{\cal S}'|=2000$) & 31.34 & 18.54 & 155 \\
	Fine-tuned DMN ($|{\cal S}'|=3000$) & 31.62 & 18.43 & 153 \\
	Fine-tuned DMN ($|{\cal S}'|=4000$) & 31.18 & 19.04 & 155 \\	
	Fine-tuned DMN ($|{\cal S}'|=4999$) & \textbf{31.65} & \textbf{19.13} & \textbf{157} \\
        \hline

	Initial DMN ($|{\cal S}|=700$) & 32.31 & 19.39 & 155 \\
	Fine-tuned DMN ($|{\cal S}'|=2000$) & 32.57 & 19.82 & 157 \\
	Fine-tuned DMN ($|{\cal S}'|=3000$) & 33.05 & \textbf{20.88} & \textbf{159} \\
	Fine-tuned DMN ($|{\cal S}'|=4000$) & 33.08 & 20.40 & 158 \\	
	Fine-tuned DMN ($|{\cal S}'|=4999$) & \textbf{33.30} & 20.18 & 158 \\
        \hline        
	\end{tabular}
	\vspace{0.1cm}
       \caption{The \textcolor{black}{discrimination} power of different DMNs w.r.t the underlying DKN on COREL5k; in these experiments, two initial DMNs are designed using 500 and 700 samples. \label{coreldmnperformance}}
\end{table}

\begin{table}[ht]
	\centering
	\begin{tabular}{c|c|ccccc}
	\hline
	 Framework & $|{\cal S}'|$ & 2K & 3K & 4K & 4999 \\
	\hline
	Initial DMN $|{\cal S}|=500$ & - & 2.45 & 2.41 & 2.35 & 2.26 \\
        \multirow{6}{*}{Fine-tuned DMN} & 500 & 1.22 & 1.28 & 1.32 & 1.37 \\
                               & 1000 & 1.23 & 1.35 & 1.40 & 1.42 \\
                               & 2000 & 1.12 & 1.15 & 1.19 & 1.19 \\	
                               & 3000 & 1.14 & 1.12 & 1.13 & 1.12 \\
                               & 4000 & 1.18 & 1.14 & 1.11 & 1.10 \\ 
                               & 4999 & 1.18 & 1.14 & 1.12 & 1.10 \\                             
        \hline
        Initial DMN $|{\cal S}|=700$ & - & 2.43 & 2.39 & 2.33 & 2.24 \\
         \multirow{6}{*}{Fine-tuned DMN} & 700 & 1.30 & 1.42 & 1.48 & 1.51 \\
                               & 1000 & 1.22 & 1.35 & 1.42 & 1.44 \\
                               & 2000 & 1.09 & 1.13 & 1.17 & 1.18 \\	
                               & 3000 & 1.11 & 1.10 & 1.11 & 1.11 \\
                               & 4000 & 1.16 & 1.12 & 1.09 & 1.08 \\ 
                               & 4999 & 1.16 & 1.12 & 1.10 & 1.08 \\                             
        \hline 
	\end{tabular}
	\vspace{0.1cm}
	\caption{Relative errors of initial and fine-tuned DMNs (w.r.t. the underlying DKN) on COREL5k as  $|{\cal T}|$ increases (with values ranging from 2K to 4999) \label{tab:coreltimeresults}}
\end{table}

\begin{figure}[ht]
\centering
\begin{center}
\includegraphics[width=0.7\linewidth]{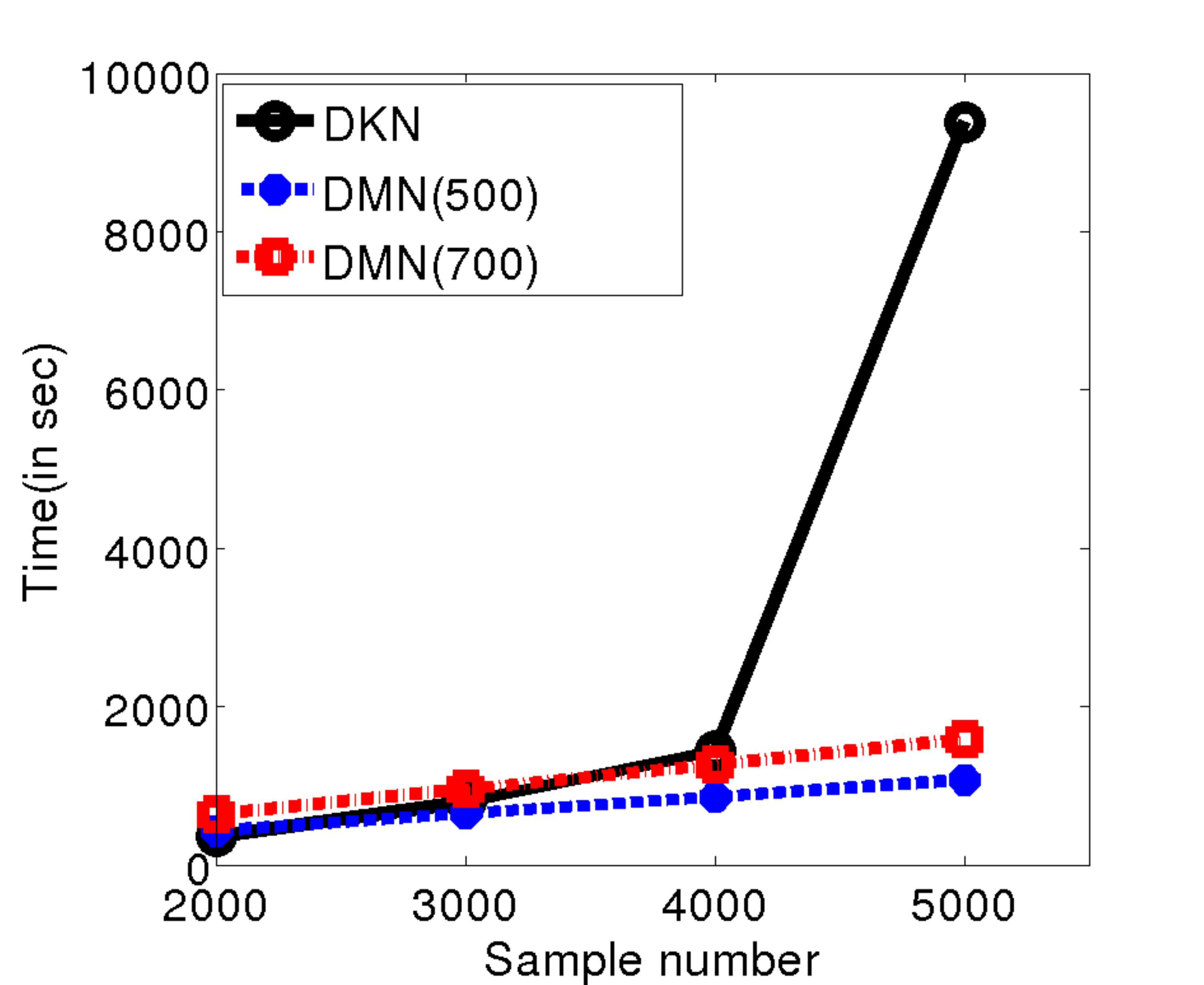}
\end{center}
\caption{\textcolor{black}{Comparison of processing time between two approximated DMNs (with $|{\cal S}|=500$ and $|{\cal S}|=700$) and their underlying DKN as $|{\cal T}|$ increases on COREL5k dataset.}\label{fig:coreltimecomparison}} 
\end{figure}

\begin{table}[tbp]
    \centering
\resizebox{0.5\textwidth}{!}{
    \begin{tabular}{c|c|c|ccc}
    \hline
          Method & Learned & context & $\mathbf{R}$ & $\mathbf{P}$ & $\mathbf{N_{+}}$ \\
          & \textcolor{black}{Input.} feat. & &  & &  \\
      \hline
          wTKML~\cite{Vo2012}&  no & yes & 42 & 21 & 173 \\
          LDMKL~\cite{ZhangDPR2012}& no & yes & 44 & 29 & 179 \\
          CNN-R~\cite{Murthyicmr2015} &yes & yes &  41.3 & 32.0 & 166 \\
          \hline
          \textcolor{black}{3-layer} DKN+SVM~\cite{Jiutip2017}& no & no & 37.7 & 25.5 & 158 \\
          Init. DMN+SVM ($|{\cal S}|=700$)& no & no & 32.3 & 19.3 & 155 \\
          FT DMN+SVM ($|{\cal S}|=700$)& no & no & 33.1 & 20.9 & 159 \\
          \textcolor{black}{Init. DMN+SVM ($|{\cal S}|=1200$)} & no & no & 34.0 & 20.9 & 162 \\
          \textcolor{black}{FT DMN+SVM ($|{\cal S}|=1200$)} & no & no & \textcolor{black}{34.7} & \textcolor{black}{21.0} & \textcolor{black}{168} \\
      \hline
          ResNet\cite{HeZhangCVPR2016} + SVM & yes & no & 34.5 & 21.8 & 161 \\
          \hline
      \textcolor{black}{3-layer} DKN+SVM~\cite{Jiutip2017} & yes & no & 42.6 & 24.9 & 180 \\
       Init. DMN+SVM ($|{\cal S}|=700$) & yes & no &  36.1 & 21.7 & 166 \\
          FT DMN+SVM ($|{\cal S}|=700$) & yes & no & 36.8 & 22.4 & 165 \\
      Init. DMN+SVM ($|{\cal S}|=1000$) & yes & no &  37.4 & 21.6 & 162 \\
          FT DMN+SVM ($|{\cal S}|=1000$) & yes & no & 37.7 & 22.3 & 164 \\    
       \textcolor{black}{Init. DMN+SVM ($|{\cal S}|=1200$)} & yes & no & 37.8 & 23.2 & 167 \\
          \textcolor{black}{FT DMN+SVM ($|{\cal S}|=1200$)} & yes & no & \textcolor{black}{38.9} & \textcolor{black}{23.2} & \textcolor{black}{169} \\               
        \hline
    \end{tabular}
    }
    \vspace{0.1cm}
    \caption{\textcolor{black}{Extra comparison of the proposed DMN w.r.t different settings as well as the related work. In these experiments,  $|{\cal S}'|=3000$ and different $|{\cal S}|$ are used. In this table, FT stands for Fine-Tuned.}\label{corelcomparisonother}}
\end{table}
 
As in ImageCLEF (see section~\ref{sec:imageclef}), we use 4 elementary kernels for each feature: linear, order two polynomial, RBF (with a scale parameter set to the average distance between data) and histogram intersection; in total, we use 60 different elementary kernels as inputs to the 3-layer DKN. We also use the same DKN architecture on {COREL5k}  with a slight difference in the number of units in the hidden layers  (equal to 120 instead of 80 in ImageCLEF). Again, the weights of DKN are learned using the semi-supervised learning procedure presented in~\cite{Jiu2015} where the similarity between images is computed by the heat kernel (with a width set to the mean distance between neighbors).
An ensemble of ``one-versus-all'' SVM classifiers is trained on top of DKN for each category. The average decision score from all the classifiers is taken as a final score for a given category. In order to avoid the severe imbalanced class distributions in SVM training, we adopt a sampling strategy that  randomly selects a subset of negative samples whose cardinality is equal to the number of positive training samples. Hence, each classifier is learned using all the positive data and a random subset of negative data. The discrimination power of the learned DKNs+SVMs is shown in Table~\ref{coreldmnperformance}. \\

\noindent {\bf Initial and fine-tuned DMNs.}  Assuming the weights $\{\w_{p,q}^{(l-1)}\}$ of DKN known (learned), we build the initial DMN as shown in Section~\ref{sec:dmn}.  We consider two random samplings of the subset $\cal S$ -- from the training set with $|{\cal S}|=500$ and $|{\cal S}|=700$  -- in order to build the initial DMN.  We also use the learning procedure presented in Section~\ref{sec:unsupervisedlearning} in order to fine-tune the parameter of the DMN. We consider an unlabeled set ${\cal S}'$ which includes up to 4,999 samples (i.e. the whole COREL5K set); again we sample 100,000 pairs  in order to minimize the criterion in Eq.~(\ref{equa:lossfunction}) using gradient descent with a step-size empirically set to $10^{-6}$, a mini-batch size equal to  200 and a max number of iterations set to 5,000. 
  
According to Table~\ref{coreldmnperformance}, we observe that the performances of the initial DMNs ($\mathbf{R}$, $\mathbf{P}$ and $\mathbf{N_{+}}$) again degrade compared to  their underlying DKNs as a result of the high RE of these DMNs. This degradation in performances is also amplified by the scarceness of training data for SVM learning in COREL5k (in contrast to ImageCLEF) especially when the RE is relatively large (see Table.~\ref{tab:coreltimeresults}). However, the discrimination power is improved when more data are used to design these DMNs (i.e., with $|{\cal S}|=700$ \textcolor{black}{and also $|{\cal S}|=1200$ in Table~\ref{corelcomparisonother}}). Furthermore, fine-tuning DMNs reduces the RE as $|{\cal S}'|$ increases, and makes RE stable even with a relatively large $|{\cal T}|$, so RE (on COREL5k) behaves similarly compared to ImageCLEF. 
Finally, Fig.~\ref{fig:coreltimecomparison} shows a comparison of processing time between DMN and DKN. It is easy to see that when $|{\cal T}|$ is small, the processing times of DKN and DMN are comparable. However, when $|{\cal T}|$  reaches large values (e.g., $|{\cal T}|=4999$), DMN becomes an order of magnitude faster than its underlying DKN while maintaining a comparable accuracy. \\

\noindent {\bf Extra comparisons.} We further compare the performance of DMNs against closely related kernel-based methods (namely wTKML~\cite{Vo2012} and LDMKL~\cite{ZhangDPR2012}) as well as convolutional neural networks (mainly CNN-R~\cite{Murthyicmr2015}). wTKML~\cite{Vo2012} learns explicit and transductive kernel maps using a priori knowledge taken from the semantic and geometric (statistical) dependencies between classes while LDMKL~\cite{ZhangDPR2012} combines Laplacian SVM with deep kernel networks using an ``end-to-end'' framework. CNN-R~\cite{Murthyicmr2015} combines deep features from Caffe-Net with \textcolor{black}{word embedding features from Word2Vec}; as introduced in the literature, these related methods leverage different sources of contexts and a priori knowledge while our method does not. \\
In our experiments (see Table~\ref{corelcomparisonother}), we use four elementary kernels (linear, polynomial, RBF and HI) combined with different features as inputs to the designed  DKN and DMN networks: ``handcrafted features'' including GIST and SIFT  and ``learned features'' taken from ResNet~\cite{HeZhangCVPR2016}  (pretrained on the ImageNet) which is a very deep architecture consisting of 152 layers; the 2048 dimensional features of the last pooling layer are used in our annotation task. Using all these elementary kernels and features, we first train a DKN in a supervised way according to \cite{Jiutip2017}, then we design and fine-tune its associated DMNs with $|{\cal S}|=700$ and  $|{\cal S}'|=3000$ (as done in Table.~\ref{coreldmnperformance}). \\
\indent From the results shown in Table~\ref{corelcomparisonother}, first, we observe that the use of ResNet features as inputs to our DMN framework provides a clear gain compared to the use of handcrafted features. Second, fine-tuning DMNs brings a clear gain compared to the initial DMNs as well as ResNet. Our DKN (and its DMN variant) can even catch (and sometimes outperform) the aforementioned related work which again relies on different contextual clues, in contrast to our method. We believe that considering context will further enhance the performance of DKNs and their associated DMNs, but this is out of the main scope of this paper and will be investigated as a future work. \\ 
\indent Finally, Fig.~\ref{fig:imageinstance} shows examples of annotation results, on the test set, obtained using the learned DMNs and the underlying DKNs on ImageCLEF and COREL5k datasets. From these figures, DMNs behave similarly, \textcolor{black}{w.r.t.} DKNs, with an extra advantage of being computationally more efficient especially on COREL5k (as shown in \textcolor{black}{Table}~\ref{tab:testtimecomparison}); whereas the computational complexity of DKN evaluation scales linearly \textcolor{black}{w.r.t.} the number of support vectors (which is an order of magnitude larger on COREL5k \textcolor{black}{w.r.t.} ImageCLEF: \textcolor{black}{4,500 versus 500}), the computational complexity of DMN evaluation grows slowly and remains globally stable \textcolor{black}{w.r.t.} the number of support vectors (which is again an order of magnitude larger on COREL5k). These results are also consistent with those already shown in Fig.~\ref{fig:timeresults-imageclef} \textcolor{black}{and Fig.~\ref{fig:coreltimecomparison}}.

\begin{table}[htbp]
    \centering
    \begin{tabular}{c|c|c}
    \hline
    Dataset & Framework & time (in sec) \\
    \hline
    \multirow{3}{*}{ImageCLEF} & DKN & 0.68  \\
    & Fine-tuned DMN ($|{\cal S}|=500$) & 0.57 \\
    & Fine-tuned DMN ($|{\cal S}|=1000$)& 0.95\\
    \hline
    \multirow{5}{*}{COREL5k} & DKN & 10.39  \\
    & Fine-tuned DMN ($|{\cal S}|=500$) & 1.22 \\
    & Fine-tuned DMN ($|{\cal S}|=700$) & 1.58 \\
    & Fine-tuned DMN ($|{\cal S}|=1000$) & 2.51 \\
    & Fine-tuned DMN ($|{\cal S}|=1200$) & 3.67 \\
    \hline
    \end{tabular}
    \vspace{0.1cm}
       \caption{\textcolor{black}{Comparison of the average processing time per test image (excluding feature extraction) on ImageCLEF and COREL5k datasets.} \label{tab:testtimecomparison}}
\end{table}

\begin{figure*}[ht]
\centering
\begin{center}
\includegraphics[width=1\linewidth]{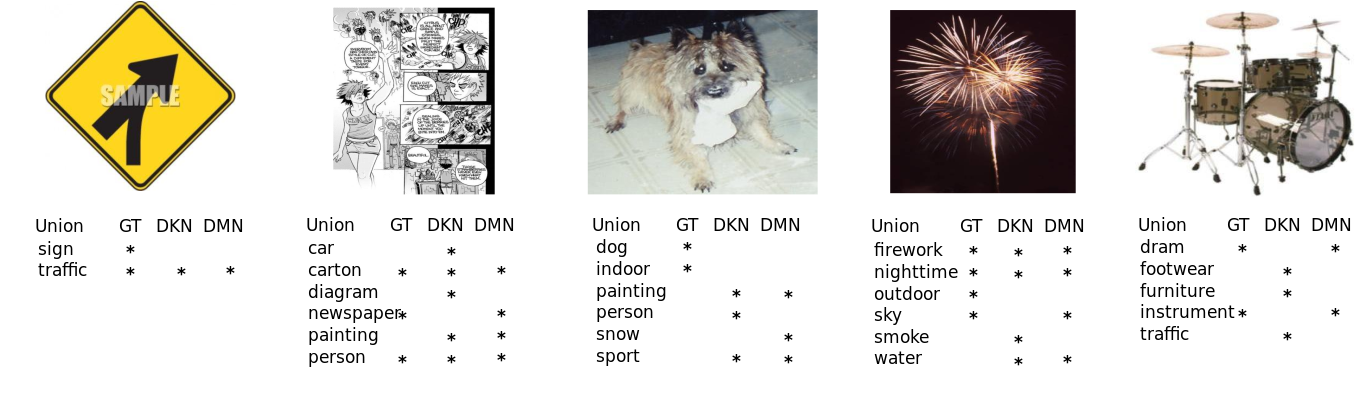}
\hspace{0.1cm}
\includegraphics[width=1\linewidth]{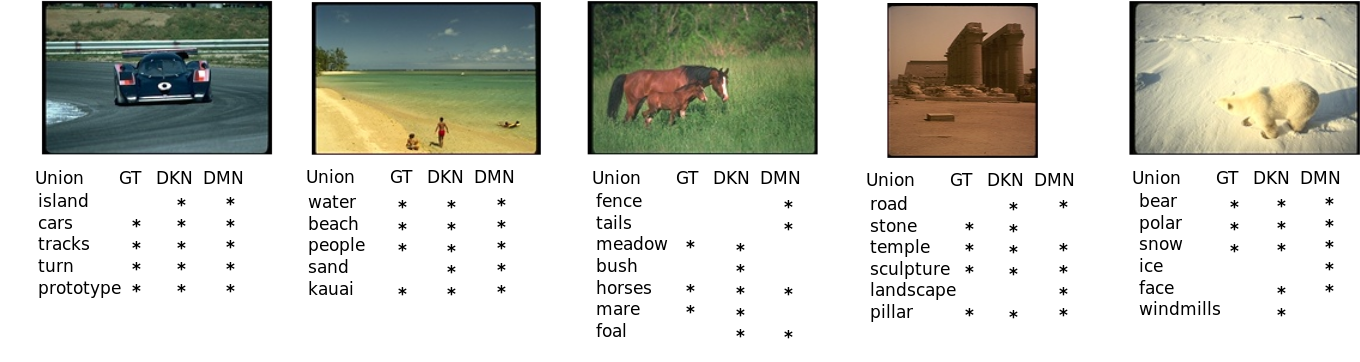}
\end{center}
\caption{{Examples of annotation results  using DKNs and their ''Fine-tuned'' DMN variants on ImageCLEF (top) and COREL5k (bottom).``GT'' stands for ground-truth keywords and the symbol ``*'' stands for the presence of a keyword in a given test image.} \label{fig:imageinstance}} 
\end{figure*}

\section{Conclusion}
In this paper we introduced a novel method that transforms deep kernel networks into highly efficient deep map networks. The proposed method is greedy and proceeds layer-wise by expressing p.s.d kernels in different (input, intermediate, and output) layers of DKN as inner products involving explicit maps. These explicit maps are either exactly designed for some input kernels (including linear and polynomial) or tightly approximated for others (including intermediate and output kernels in DKN). We also introduced an unsupervised fine-tuning algorithm that benefits from large unlabeled sets in order to further enhance the generalization capacity of DMNs. Extensive experiments in image annotation, using the challenging ImageCLEF and COREL5k benchmarks, clearly demonstrate the effectiveness of DMNs and their high efficiency. 

\section*{Acknowledgment}

This work was supported in part by a grant from the research agency ANR (Agence Nationale de la Recherche) under the MLVIS project, ANR-11-BS02-0017.
\ifCLASSOPTIONcaptionsoff
  \newpage
\fi

\bibliographystyle{IEEEtran}
\bibliography{references}

\end{document}